%% file: main.tex
\PassOptionsToPackage{table}{xcolor}
\documentclass[sigconf]{aamas}
\input{aamas-2027-compat.tex}

\usepackage{balance}
\usepackage{placeins}
\usepackage{xurl}
\usepackage[table]{xcolor}
\usepackage{tabularx}
\newcommand{\cmark}{\ding{51}}
\newcommand{\xmark}{\ding{55}}
\makeatletter
\gdef\@copyrightpermission{
  \begin{minipage}{0.2\columnwidth}
   \href{https://creativecommons.org/licenses/by/4.0/}{\includegraphics[width=0.90\textwidth]{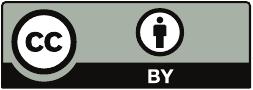}}
  \end{minipage}\hfill
  \begin{minipage}{0.8\columnwidth}
   \href{https://creativecommons.org/licenses/by/4.0/}{This work is licensed under a Creative Commons Attribution International 4.0 License.}
  \end{minipage}
  
}
\makeatother

\setcopyright{ifaamas}
\acmConference[AAMAS 2027]{Proc.\@ of the 26th International Conference
on Autonomous Agents and Multiagent Systems (AAMAS 2027)}{3--7 May 2027}{Hanoi, Vietnam}{M.~Baldoni, F.~Fang, W.~Yeoh, N.~Yorke-Smith (eds.)}
\copyrightyear{2027}
\acmYear{2027}
\acmDOI{}
\acmISBN{}

\acmSubmissionID{}
\submissionType{Research Paper Track}

\title{Heavy-Tailed Memory Traces in Long-Horizon Language Agents}

\author{Xinyuan Song}
\affiliation{
  \institution{Emory University}
  \city{Atlanta}
  \country{USA}
}
\email{xinyuan.song@emory.edu}
\author{Zekun Cai}
\affiliation{
  \institution{The University of Tokyo}
  \city{Tokyo}
  \country{Japan}
}
\email{caizekun@csis.u-tokyo.ac.jp}

\AtBeginEnvironment{figure}{%
  \Description{Experimental results and analyses illustrating the world-model dynamics, critical behavior, error propagation, avalanche statistics, and intervention effects studied in this work.}%
}
\AtBeginEnvironment{figure*}{%
  \Description{Experimental results and analyses illustrating the world-model dynamics, critical behavior, error propagation, avalanche statistics, and intervention effects studied in this work.}%
}

\begin{abstract}
Long-horizon language agents are becoming increasingly important for tasks that require persistent interaction, planning, and reasoning over extended trajectories. External memory plays a central role in these agents by preserving past states, transitions, and experiences beyond the limited context window. Existing memory systems, however, are typically evaluated mainly by task success and token cost, with much less attention to how memory usage is distributed across stored information. We show that repeated retrieval can produce heavy-tailed memory usage, where a small set of states is accessed frequently while many others are rarely retrieved and more prone to prediction errors. We study this phenomenon through a systematic tail audit across retrieval policies and graph structures. We find that memory concentration is reproducible but strongly policy-dependent, ranging from log-normal-compatible patterns under random walks to truncated-power-law-compatible heavy tails under semantic LLM policies. Motivated by these findings, we propose \textbf{Core--Tail Memory Controller (CTMC)}, a rank-based memory controller that allocates prompt budget with a single exponent $\tau$ while preserving summarized information from the tail. Across Synthetic Graph World and ALFWorld, CTMC substantially reduces memory usage while preserving performance, and on LongMemEval it reduces prompt tokens by 24.48\% while maintaining aggregate accuracy. These results show that heavy-tailed memory usage is both a measurable property of long-horizon agent memory and a practical signal for improving memory efficiency.
\end{abstract}

\ccsdesc[500]{Computing methodologies~Intelligent agents}
\ccsdesc[300]{Information systems~Information retrieval}
\keywords{Language agents, External memory, Heavy-tailed distributions, Core--tail allocation}

\newtheorem{theorem}{Theorem}[section]

\newtheorem{definition}[theorem]{Definition}

\AtEndPreamble{%
  \hypersetup{
    pdfauthor={},
    colorlinks=true,
    linkcolor=blue,
    citecolor=black,
    urlcolor=blue,
    pdfborder={0 0 0},
    pdfborderstyle={}
  }%
}

\newsavebox{\ManuscriptLinkBox}
\DeclareRobustCommand{\ManuscriptLinkFrame}[2]{%
  \begingroup
    \setlength{\fboxsep}{0.3pt}%
    \setlength{\fboxrule}{0.6pt}%
    \sbox{\ManuscriptLinkBox}{#2}%
    \leavevmode
    \rlap{%
      \kern-\dimexpr\fboxsep+\fboxrule\relax
      \smash{\textcolor{#1}{\fbox{\phantom{\usebox{\ManuscriptLinkBox}}}}}%
    }%
    \usebox{\ManuscriptLinkBox}%
  \endgroup
}
\AddToHook{begindocument/end}{%
  \let\ManuscriptOriginalCiteNumFont\citenumfont
  \renewcommand*{\citenumfont}[1]{%
    \ManuscriptLinkFrame{green}{\ManuscriptOriginalCiteNumFont{#1}}%
  }%
  \let\ManuscriptOriginalEqref\eqref
  \renewcommand*{\eqref}[1]{%
    \begingroup
      \hypersetup{linkcolor=black}%
      \ManuscriptLinkFrame{red}{\ManuscriptOriginalEqref{#1}}%
    \endgroup
  }%
}

\begin{document}
\maketitle

\section{Introduction}

Long-horizon language agents are becoming increasingly important for tasks that require persistent interaction, planning, and reasoning over extended trajectories. Unlike single-turn language-model inference, these agents must preserve and reuse useful information across observations, actions, tool calls, and multiple interaction sessions. External memory has therefore become a key component of long-horizon agents, enabling relevant past experience to remain accessible beyond the current context window \citep{packer2023memgpt,park2023generative}. Systems such as MemoryBank further highlight the importance of continuously updating and selectively retaining information during sustained interaction \citep{zhong2024memorybank_new}, while more recent agentic memory architectures organize accumulated experience into dynamically evolving structures rather than treating memory as a static archive \citep{xu2025amem_new}.

As interaction histories grow, memory systems must selectively retain and retrieve useful information under a limited context budget. Prior work has addressed this need through mechanisms ranging from episodic and persistent memory to structured retrieval and adaptive memory organization \citep{shinn2023reflexion,zhong2024memorybank,gutierrez2024hipporag,anokhin2024arigraph,xu2025amem}. Long-term memory benchmarks further demonstrate the importance of accessing relevant information across sessions and temporal dependencies, rather than simply preserving increasingly long interaction histories \citep{wu2025longmemeval_new}. These systems are typically evaluated by downstream performance, retrieval quality, and context or token cost. Yet these metrics reveal little about how retrieval effort is distributed across stored memories over long trajectories. Some memories may be repeatedly accessed while others are rarely revisited, but the extent and structure of this imbalance remain poorly understood. In particular, it is unclear whether such concentration is a consistent property of long-horizon memory, how it changes across retrieval policies and memory structures, and whether it can inform more efficient memory allocation.

We study this distribution and find that repeated retrieval can produce heavy-tailed memory usage. A small set of states receives a large fraction of memory access, while many others are rarely retrieved. This imbalance matters because rarely retrieved states receive less representation in the context used for subsequent prediction and decision making, allowing errors to accumulate. Heavy-tailed memory usage therefore offers a different view of memory efficiency: beyond how many tokens are consumed, we ask \emph{where those tokens are repeatedly spent}.

\begin{figure*}[!ht]
  \centering
  \includegraphics[width=0.9\linewidth]{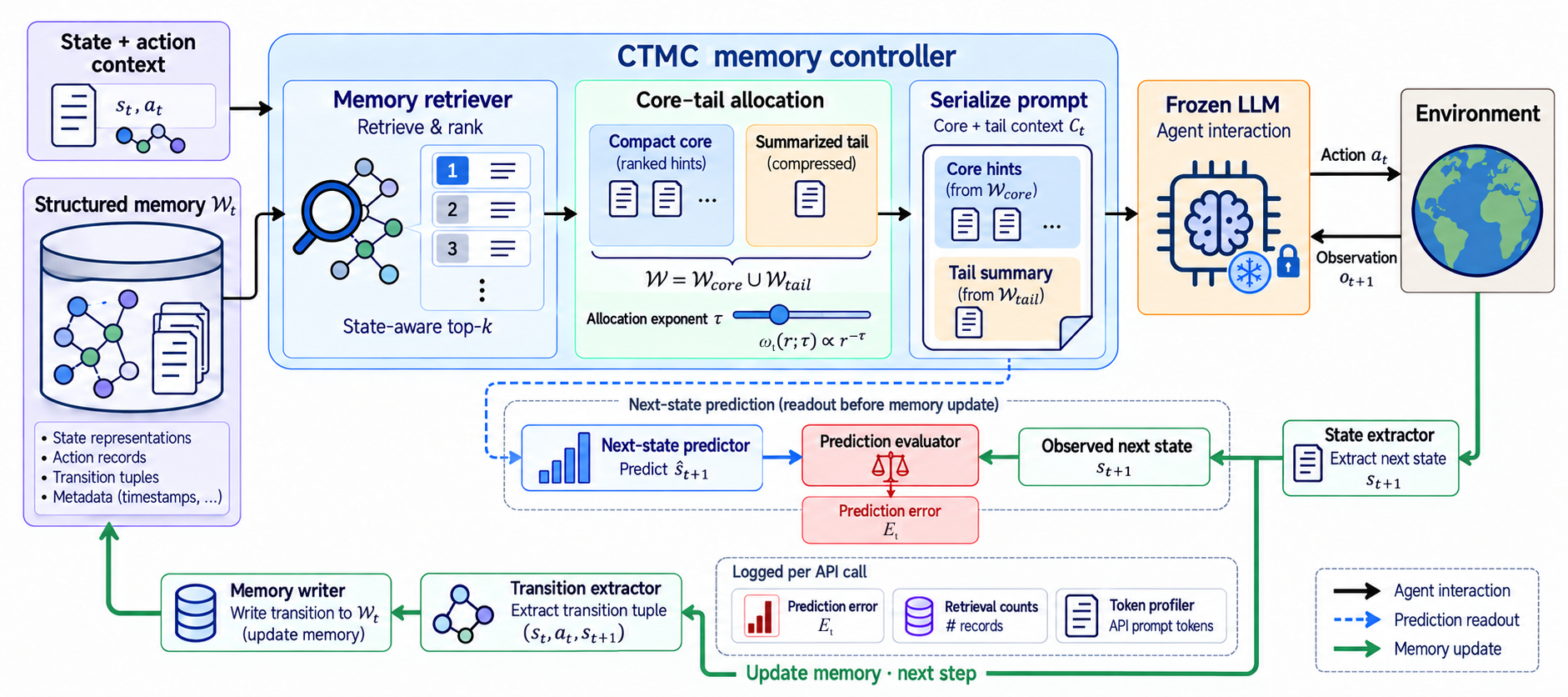}
  
\caption{\textbf{CTMC agent loop.} (a) Retrieve and rank candidates from structured memory. (b) Allocate context between a compact core and a summarized tail using rank-based CTMC. (c) Construct the LLM context, evaluate next-state predictions, and update memory from new interactions.}
  \label{fig:intuition}
  
\end{figure*}

Heavy-tailed distributions are common in networks and systems shaped by heterogeneous interactions \citep{newman2005powerlaws}, and similar concentration can emerge in agent memory as a small set of states receives a large fraction of retrievals. We examine this phenomenon using a tail audit across retrieval policies and graph topologies \citep{clauset2009powerlaw,alstott2014powerlaw}. Random-walk agents already exhibit substantial concentration, with traces that are primarily log-normal-compatible, while semantic LLM policies produce stronger heavy-tailed behavior and the clearest truncated-power-law-compatible traces. Additional topology and retrieval-mechanism controls show that this concentration is reproducible across memory settings \citep{virkar2014binned}. We capture this structure with a rank-based memory allocation,
\begin{equation}
\omega_t(r;\tau)\propto r^{-\tau},
\label{eq:intro_allocation}
\end{equation}
where $r$ denotes retrieval rank, $\omega_t(r;\tau)$ denotes the allocation share assigned to rank $r$ at step $t$, and $\tau>0$ controls the concentration toward high-priority memory items.

Motivated by this observed structure in memory use, we design \textbf{Core--Tail Memory Controller (CTMC)}, a rank-based memory controller that uses the measured memory distribution to allocate a limited context budget. As illustrated in Figure~\ref{fig:intuition}, CTMC ranks stored memories according to relevance and transition support and allocates the prompt budget following Eq.~\eqref{eq:intro_allocation}. Increasing $\tau$ concentrates more of the available budget on highly ranked memories, while CTMC preserves summarized information from the tail rather than discarding it. CTMC thus controls how memory is distributed within the context without retraining the underlying language model or introducing a separate learned dynamics model.

We evaluate CTMC across synthetic graph dynamics, embodied text environments, and long-term conversational memory. On Synthetic Graph World, CTMC reduces prompt-token usage and tail prediction error while preserving full state and transition coverage. On ALFWorld \citep{shridhar2020alfworld}, CTMC consistently reduces prompt-token usage across different ReAct configurations \citep{yao2022react}. On LongMemEval \citep{wu2025longmemeval_new}, which evaluates long-term memory across information extraction, multi-session reasoning, temporal reasoning, knowledge updates, and abstention, CTMC reduces prompt tokens by 24.48\% while maintaining aggregate accuracy. Together, these results show that the observed memory-usage structure can be directly used to allocate context more efficiently across different long-horizon settings.

Our main contributions are summarized as follows:
\begin{itemize}
    \item We characterize \textbf{heavy-tailed memory usage} in long-horizon language agents, showing reproducible concentration across retrieval and topology controls, with the clearest truncated-power-law-compatible traces under semantic LLM policies.
    
    \item We introduce \textbf{Core-Tail Memory Controller}, a rank-based controller that converts memory-usage structure into a prompt-allocation rule governed by a single parameter $\tau$, with formal monotonicity guarantees.
    
    \item We validate CTMC on \textbf{Synthetic Graph World, ALFWorld, and LongMemEval}, showing consistent prompt-token savings while preserving overall performance, including a 24.48\% reduction on LongMemEval with aggregate accuracy parity.
\end{itemize}

\section{Related Work}

\begin{table*}[!ht]
\centering
\caption{\textbf{Comparison with related agent-memory systems.}
Methods are categorized by persistent-memory construction, retrieval control, explicit budget allocation, tail-aware representation, distributional diagnosis, and matched-replay analysis.}
\label{tab:related_work_comparison}

\begingroup

\resizebox{0.95\textwidth}{!}{%
\begin{tabular}{
@{}
>{\raggedright\arraybackslash}p{0.15\textwidth}
>{\centering\arraybackslash}p{0.11\textwidth}
>{\centering\arraybackslash}p{0.11\textwidth}
>{\centering\arraybackslash}p{0.12\textwidth}
>{\centering\arraybackslash}p{0.12\textwidth}
>{\centering\arraybackslash}p{0.13\textwidth}
>{\centering\arraybackslash}p{0.13\textwidth}
@{}
}
\toprule
\textbf{Method}
& \textbf{Persistent memory}
& \textbf{Retrieval control}
& \textbf{Budget allocation}
& \textbf{Tail-aware}
& \textbf{Distributional diagnosis}
& \textbf{Matched replay}
\\
\midrule

AriGraph \citep{anokhin2024arigraph}
& \cmark & \cmark & \xmark & \xmark & \xmark & \xmark \\

A-Mem \citep{xu2025amem}
& \cmark & \cmark & \xmark & \xmark & \xmark & \xmark \\

Zep \citep{rasmussen2025zep}
& \cmark & \cmark & \xmark & \xmark & \xmark & \xmark \\

Mem0 \citep{chhikara2025mem0}
& \cmark & \cmark & \xmark & \xmark & \xmark & \xmark \\

\midrule

LightMem \citep{fang2026lightmem}
& \cmark & \cmark & \cmark & \cmark & \xmark & \xmark \\

ACON \citep{acon2025}
& \xmark & \cmark & \cmark & \xmark & \xmark & \xmark \\

AgeMem \citep{yu2026agemem}
& \cmark & \cmark & \cmark & \cmark & \xmark & \xmark \\

\midrule
\rowcolor{gray!15}
\textbf{CTMC (ours)}
& \cmark & \cmark & \cmark & \cmark & \cmark & \cmark \\

\bottomrule
\end{tabular}%
}
\endgroup

\end{table*}

\paragraph{Long-term memory for language agents.}
External memory is central to long-horizon language agents. MemGPT \citep{packer2023memgpt} formulates memory management as operating-system-style paging, Generative Agents \citep{park2023generative} retrieve memories using recency, importance, and relevance, and Reflexion \citep{shinn2023reflexion} stores verbal feedback for later attempts. MemoryBank \citep{zhong2024memorybank} and A-MEM \citep{xu2025amem} further organize and update long-term experience, while AriGraph \citep{anokhin2024arigraph} and Zep \citep{rasmussen2025zep} maintain relational or temporal memory structures. Voyager \citep{wang2023voyager} similarly accumulates reusable skills through interaction. In contrast, we study how stored memories are repeatedly accessed over long trajectories and use the resulting usage structure to guide finite-budget allocation.

\paragraph{Retrieval and finite context.}
Retrieval-augmented generation (RAG) \citep{lewis2020rag}, kNN-LM \citep{khandelwal2020knnlm}, and RETRO \citep{borgeaud2022retro} use external evidence to improve prediction without updating model weights. Yet longer context does not guarantee effective use of available information: Lost in the Middle \citep{liu2024lost} shows that models can underuse information within long prompts, while StreamingLLM \citep{xiao2024streamingllm} preserves selected tokens under bounded attention. We study this problem at the memory-system level, viewing a finite context budget as an allocation problem over retrieved memories rather than simply a context-length constraint.

\paragraph{Tool use and interactive benchmarks.}
ReAct \citep{yao2022react} interleaves language reasoning with environment actions, while Toolformer \citep{schick2023toolformer}, ToolBench/ToolLLM \citep{qin2023toolllm}, API-Bank \citep{li2023apibank}, and Gorilla \citep{patil2024gorilla} study tool selection and execution. ALFWorld \citep{shridhar2020alfworld} provides embodied text interaction through TextWorld \citep{cote2018textworld}, while WebShop \citep{yao2022webshop} and HotpotQA \citep{yang2018hotpotqa} provide complementary interactive and multi-hop settings. LongMemEval \citep{wu2025longmemeval_new} evaluates long-term conversational memory, and $\tau$-bench \citep{yao2024taubench} evaluates realistic tool-agent-user interaction. These settings motivate evaluating memory inside the full agent loop; accordingly, we measure actual API-reported prompt usage and use paired comparisons rather than relying only on offline retrieval metrics.

\paragraph{Graph retrieval and retrieval concentration.}
GraphRAG \citep{edge2024graphrag}, LightRAG \citep{he2024lightrag}, and HippoRAG \citep{gutierrez2024hipporag} exploit relational structure to improve retrieval and reasoning. However, graph connectivity and top-$k$ retrieval can repeatedly expose a small subset of memory items, producing concentration similar to link-based ranking \citep{brin1998anatomy}. We account for this effect using random-walk and retrieval-mechanism controls before using the observed memory ranking for CTMC allocation.

\paragraph{Heavy-tail statistics and criticality.}
Heavy-tailed distributions occur across natural, social, and computational systems \citep{newman2005powerlaws}, including observations in neural scaling and learned models \citep{kaplan2020scaling,martin2018implicit,martin2019htsr}. Distinguishing among heavy-tailed distributions requires care because finite samples and alternative tail families can produce similar patterns \citep{clauset2009powerlaw,alstott2014powerlaw,stumpf2012critical,virkar2014binned,broido2019scale}. Evidence for criticality requires additional temporal and event-level structure \citep{bak1987soc,sethna2001crackling,beggs2003neuronal,cocchi2017criticality}. We therefore characterize memory traces as \emph{heavy-tailed}, \emph{log-normal-compatible}, or \emph{truncated-power-law-compatible} according to the observed evidence, without interpreting heavy-tailed usage alone as criticality.

\paragraph{Relation to recent agent-memory systems.}
Table~\ref{tab:related_work_comparison} compares CTMC with closely related agent-memory systems across memory construction and retrieval, explicit context-budget allocation, tail-aware representation, distributional diagnosis, and matched-replay analysis.

\section{Problem Setup and Tail Audit}
\label{sec:preliminaries}

We consider a long-horizon agent equipped with an external memory and retrieval mechanism \citep{lewis2020rag,khandelwal2020knnlm,borgeaud2022retro,packer2023memgpt}. At interaction step $t$, the agent observes $o_t$, selects an action $a_t$, receives $o_{t+1}$, and extracts the corresponding structured states $s_t$ and $s_{t+1}$. The transition $e_t=(s_t,a_t,s_{t+1})$ is stored in a finite memory $W_t$, from which the agent retrieves context $C_t\subseteq W_t$ to estimate the next world state $\hat{s}_{t+1}$. The state-prediction error is
\begin{equation}
E_t=\mathbf{1}\left\{\hat{s}_{t+1}\neq s_{t+1}\right\}.
\label{eq:state_prediction_error}
\end{equation}

Let $\mathcal{S}_T=\{s_t:1\le t\le T\}$ denote the set of states visited over a trajectory of length $T$. For each $s\in\mathcal{S}_T$, its empirical visit count is
\begin{equation}
N_T(s)=\sum_{t=1}^{T}\mathbf{1}\left\{s_t=s\right\}.
\label{eq:state_visit_count}
\end{equation}
Order the states in $\mathcal{S}_T$ by increasing $N_T(s)$ and let $\mathcal{R}_T\subseteq\mathcal{S}_T$ contain the lowest-ranked $\lfloor|\mathcal{S}_T|/2\rfloor$ states, with ties resolved by a fixed ordering. We define tail error as
\begin{equation}
E_{\rm tail}(T)=\frac{\sum_{t=1}^{T}E_t\mathbf{1}\left\{s_t\in\mathcal{R}_T\right\}}{\sum_{t=1}^{T}\mathbf{1}\left\{s_t\in\mathcal{R}_T\right\}}.
\label{eq:tail_error}
\end{equation}
Thus, \eqref{eq:tail_error} is the empirical prediction-error rate over visits to the bottom half of states by visit frequency.

\paragraph{Statistical tail audit.}
We audit the upper tail of the empirical visit counts $N_T(s)$. Following \citet{clauset2009powerlaw} and finite-sample cautions for heavy-tailed data \citep{newman2005powerlaws,stumpf2012critical,virkar2014binned}, we estimate a lower cutoff $\hat{N}_{\min}$ and tail exponent $\hat{\alpha}$.

\begin{definition}[Fitted memory tail]
\label{def:fitted_memory_tail}
Given the estimated cutoff $\hat{N}_{\min}$, the number of observations in the fitted memory tail is
\begin{equation}
n_{\rm tail}=\sum_{s\in\mathcal{S}_T}\mathbf{1}\left\{N_T(s)\ge\hat{N}_{\min}\right\}.
\label{eq:tail_sample_size}
\end{equation}
\end{definition}

We retain only fitted tails with $n_{\rm tail}\ge100$ for statistical analysis and regard a fitted truncated power law as compatible with the observed tail when its bootstrap goodness-of-fit satisfies $p_{\rm gof}\ge0.10$. This criterion indicates statistical compatibility rather than a unique distributional explanation and does not imply scale-free topology or critical dynamics \citep{broido2019scale,sethna2001crackling}.

\paragraph{Memory ranking.}
At step $t$, the memory $W_t$ contains $M_t=|W_t|$ candidate items. We order these items by retrieval priority and assign each item a rank $r\in\{1,\ldots,M_t\}$, where $r=1$ denotes the highest-priority item and larger $r$ denotes progressively lower retrieval priority. This ranking separates frequently prioritized memory into higher-ranked items and less frequently prioritized memory into lower-ranked items. The memory controller in Section~\ref{sec:method} uses the rank $r$ to allocate memory, with the concentration parameter $\tau>0$ controlling how strongly the allocation favors higher-ranked items.

\section{Core--Tail Memory}
\label{sec:method}

\begin{figure*}[!ht]
  \centering
  \includegraphics[width=0.9\linewidth]{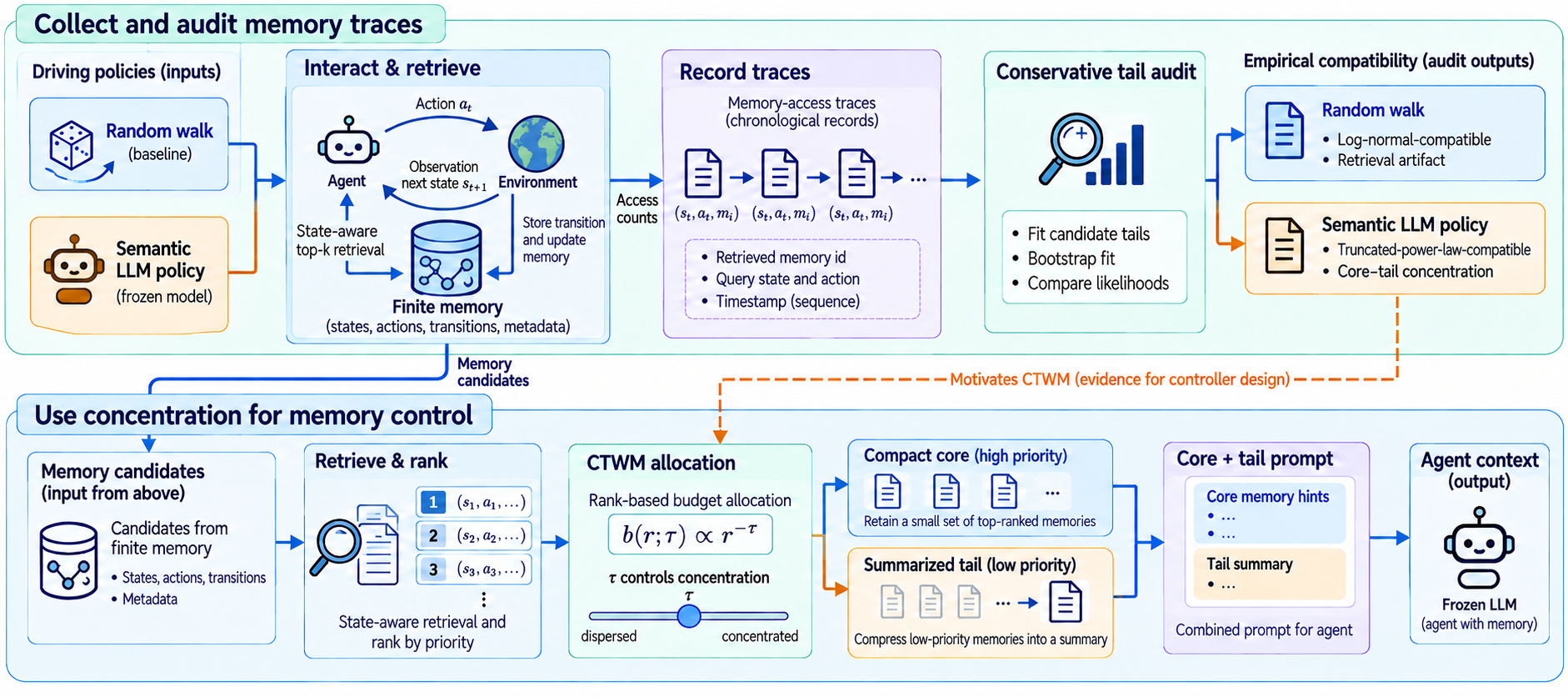}

  \caption{\textbf{Memory audit and CTMC allocation pipeline.} Memory-access traces are collected under random-walk and semantic LLM policies and audited for their tail behavior. The observed concentration then motivates CTMC, which ranks memory candidates and allocates the context budget between a compact high-priority core and a summarized tail according to $b(r;\tau)\propto r^{-\tau}$.}
  \label{fig:CTMC}
  
\end{figure*}

Figure~\ref{fig:intuition} gives an overview of CTMC within the agent loop. The controller retrieves from structured memory, constructs a core--tail context for the frozen LLM, and updates memory from subsequent interactions while recording prediction and retrieval statistics.

Figure~\ref{fig:CTMC} details the CTMC mechanism. Memory-access traces are first collected and audited for their empirical tail behavior, revealing how retrieval is concentrated across memory items. Motivated by this concentration, CTMC ranks candidate memories by retrieval priority and allocates context across ranks. High-priority items are retained individually as a compact core, while lower-priority items are compressed into a summarized tail rather than discarded. A single concentration parameter $\tau$ controls the allocation between these two parts, producing the combined context used by the agent.

\subsection{Rank-Based Allocation}
\label{sec:rank_allocation}

CTMC allocates the memory in $W_t$ according to retrieval rank $r$. For an item at rank $r$, let $\omega_t(r;\tau)$ denote its normalized share of the memory allocation at step $t$:
\begin{equation}
\omega_t(r;\tau)=\frac{r^{-\tau}}{\sum_{r'=1}^{M_t}(r')^{-\tau}}.
\label{eq:CTMC_allocation}
\end{equation}
Because smaller $r$ indicates higher retrieval priority, \eqref{eq:CTMC_allocation} assigns larger shares to higher-priority items. In particular, for $1\le r<s\le M_t$, we have $\omega_t(r;\tau)>\omega_t(s;\tau)$, so the allocation preserves the ordering produced by retrieval.

The allocation is monotone with retrieval rank. For any $1\le r<s\le M_t$ and $\tau>0$,
\begin{equation}
\omega_t(r;\tau)>\omega_t(s;\tau).
\label{eq:allocation_monotonicity}
\end{equation}
Thus, CTMC preserves the priority ordering produced by retrieval: an item with smaller rank always receives a larger allocation share. The allocation rule changes how strongly different ranks are represented in the context, but does not change their ordering.

The parameter $\tau$ controls the strength of this concentration. For any two ranks $1\le r<s\le M_t$, their relative allocation satisfies
\begin{equation}
\frac{\omega_t(r;\tau)}{\omega_t(s;\tau)}=\left(\frac{s}{r}\right)^\tau.
\label{eq:rank_concentration}
\end{equation}
Because $s/r>1$, this ratio increases with $\tau$. Smaller $\tau$ therefore produces a flatter allocation across $W_t$, giving lower-priority items a larger relative share, whereas larger $\tau$ increasingly concentrates allocation on high-priority items. In this way, $\tau$ controls memory concentration continuously without altering the underlying retrieval ranking.

\subsection{Core--Tail Memory Control}
\label{sec:core_tail_control}

CTMC converts the ranked memory into two complementary context representations. At step $t$, let $K_t<M_t$ denote the number of memory items retained individually. The first $K_t$ ranks form the core: these high-priority items are preserved as compact records containing their original state and transition information. The remaining $M_t-K_t$ items form the tail. Rather than inserting these lower-priority items separately into the context, CTMC aggregates their information into a shared summary. The resulting context therefore combines detailed evidence from the core with compressed evidence from the broader memory trace.

The allocation in \eqref{eq:CTMC_allocation} controls the relative emphasis placed on these two representations. The total allocation assigned to the core is
\begin{equation}
C_t(K_t,\tau)=\sum_{r=1}^{K_t}\omega_t(r;\tau).
\label{eq:core_mass}
\end{equation}
The tail receives the complementary allocation $1-C_t(K_t,\tau)$. CTMC uses these two quantities as the relative context budgets for the explicit core records and the compressed tail summary. Thus, $K_t$ determines the boundary between individually retained and summarized memories, while $\tau$ determines how the available context budget is distributed across that boundary.

These two controls play different roles. Changing $K_t$ changes the representation itself by moving items between the explicit core and summarized tail. In contrast, changing $\tau$ leaves the ranking and core--tail boundary fixed but changes their relative allocation. For fixed $K_t$, $C_t(K_t,\tau)$ increases monotonically with $\tau$: larger $\tau$ assigns more of the context budget to high-priority core items, whereas smaller $\tau$ preserves more budget for information from the tail. CTMC therefore adjusts memory concentration without discarding the lower-ranked portion of the memory.

When $\tau$ approaches zero, the allocation becomes increasingly uniform across ranks, whereas larger $\tau$ places progressively greater emphasis on high-priority memory. CTMC therefore spans a continuum from broadly distributed memory use to a strongly core-focused representation, while retaining lower-priority information through tail compression. This behavior can be characterized by two properties of the allocation rule. The first establishes that, for a fixed core--tail boundary, increasing $\tau$ always increases the total allocation assigned to the core.

\begin{lemma}[Monotone core allocation]
\label{lem:core_mass}
For fixed $1\le K_t<M_t$, the core allocation $C_t(K_t,\tau)$ is strictly increasing in $\tau$:
\begin{equation}
\frac{\partial}{\partial\tau}C_t(K_t,\tau)>0.
\label{eq:core_mass_monotonicity}
\end{equation}
\end{lemma}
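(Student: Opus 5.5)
We will differentiate $C_t(K_t,\tau)$ directly and reduce positivity to a covariance-type inequality. Write $Z(\tau)=\sum_{r=1}^{M_t} r^{-\tau}$ and $A(\tau)=\sum_{r=1}^{K_t} r^{-\tau}$, so that $C_t(K_t,\tau)=A(\tau)/Z(\tau)$. Since $\partial_\tau r^{-\tau}=-(\ln r)\,r^{-\tau}$, the quotient rule gives
\begin{equation*}
\frac{\partial C_t}{\partial\tau}=\frac{-\sum_{r\le K_t}(\ln r)r^{-\tau}\,Z+A\sum_{r\le M_t}(\ln r)r^{-\tau}}{Z^2}.
\end{equation*}
Dividing by $Z$ and using the weights $\omega_t(r;\tau)$ from \eqref{eq:CTMC_allocation}, this equals $C_t\bigl(\mathbb{E}_\omega[\ln r]-\mathbb{E}_\omega[\ln r\mid r\le K_t]\bigr)$. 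Here $\mathbb{E}_\omega$ denotes expectation of $\ln r$ under the distribution $\omega_t(\cdot;\tau)$ on $\{1,\dots,M_t\}$.

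The key step is to show that the conditional mean of $\ln r$ over the core is strictly smaller than its overall mean. Decompose the full expectation as $\mathbb{E}_\omega[\ln r]=C_t\,m_{\rm core}+(1-C_t)\,m_{\rm tail}$, where $m_{\rm core}$ and $m_{\rm tail}$ are the conditional means over $r\le K_t$ and $r>K_t$. Then
\begin{equation*}
\mathbb{E}_\omega[\ln r]-m_{\rm core}=(1-C_t)(m_{\rm tail}-m_{\rm core}).
\end{equation*}
Every core rank satisfies $\ln r\le \ln K_t$, and every tail rank satisfies $\ln r\ge \ln(K_t+1)>\ln K_t$. Hence $m_{\rm tail}\ge\ln(K_t+1)>\ln K_t\ge m_{\rm core}$.

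It remains to check the factors are strictly positive. Since $K_t<M_t$, the tail is nonempty, and every weight $r^{-\tau}$ is positive, so $0<C_t<1$. Therefore
\begin{equation*}
\frac{\partial C_t}{\partial\tau}=C_t(1-C_t)(m_{\rm tail}-m_{\rm core})>0.
\end{equation*}
This holds for every real $\tau$, in particular for all $\tau>0$. Differentiability is immediate because the sums are finite sums of smooth functions of $\tau$.

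The only delicate point is the strictness of the separation $m_{\rm tail}>m_{\rm core}$. It relies on $K_t<M_t$ and on the strict monotonicity of $\ln$ on positive integers, including the case $K_t=1$, where $m_{\rm core}=0<\ln 2$. Nothing beyond these elementary facts and the allocation formula \eqref{eq:CTMC_allocation} is needed. An alternative route, which we would use as a check, is a direct pairwise argument: expand the numerator as $\sum_{r\le K_t<s}r^{-\tau}s^{-\tau}(\ln s-\ln r)$. Each term of this sum is strictly positive.
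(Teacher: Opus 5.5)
Your proof is correct and takes essentially the same approach as the paper. Both write $\partial_\tau C_t = C_t\,(\mu_t-\mu_{\mathrm{core},t})$ and conclude from the fact that the overall mean log-rank is a strict convex combination of the core and tail means, with the tail mean strictly larger; your factorization $C_t(1-C_t)(m_{\rm tail}-m_{\rm core})$ and the pairwise expansion are only a slightly sharper packaging of that argument.
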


Lemma~\ref{lem:core_mass} formalizes the concentration effect of $\tau$ at the core level: increasing $\tau$ shifts allocation toward the first $K_t$ ranks without changing either the retrieval ordering or the core--tail boundary. The proof is given in Appendix~\ref{app:proof_core_mass}. CTMC has a well-defined non-concentrated limit in which retrieval rank no longer affects allocation.

\begin{lemma}[Uniform allocation limit]
\label{lem:uniform_limit}
For fixed finite $M_t$ and every $r\in\{1,\ldots,M_t\}$,
\begin{equation}
\lim_{\tau\downarrow0}\omega_t(r;\tau)=\frac{1}{M_t}.
\label{eq:uniform_limit}
\end{equation}
\end{lemma}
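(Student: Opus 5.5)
The plan is to argue directly from the closed form \eqref{eq:CTMC_allocation}. The only fact needed is that, for each fixed integer $r\ge1$, the map $\tau\mapsto r^{-\tau}=\exp(-\tau\ln r)$ is continuous on $\mathbb{R}$ and takes the value $1$ at $\tau=0$. Since $M_t$ is fixed and finite, both the numerator and the denominator of $\omega_t(r;\tau)$ are finite sums of continuous functions of $\tau$. We can therefore pass to the limit term by term without any uniformity or interchange-of-limits issue.

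The steps are as follows.
\begin{enumerate}
\item Show that the numerator converges: $\lim_{\tau\downarrow0}r^{-\tau}=e^{0}=1$.
\item Show that the denominator converges: $\lim_{\tau\downarrow0}\sum_{r'=1}^{M_t}(r')^{-\tau}=\sum_{r'=1}^{M_t}1=M_t$, using linearity of limits over the finite index set.
\item Check that the limiting denominator $M_t\ge1$ is nonzero. The quotient rule for limits then gives $\lim_{\tau\downarrow0}\omega_t(r;\tau)=1/M_t$, which is \eqref{eq:uniform_limit}.
\end{enumerate}

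Alternatively, one could note that $\omega_t(r;\tau)$ extends continuously to $\tau=0$, where it equals $1/M_t$ exactly, and take the one-sided limit of that continuous function.

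There is no substantive obstacle. The only point to flag is that finiteness of $M_t$ is essential. For an infinite rank set the denominator would diverge as $\tau\downarrow0$ (the harmonic-type series $\sum r^{-\tau}$ diverges for $\tau\le1$), and the limiting shares would all be zero rather than uniform. The hypothesis ``for fixed finite $M_t$'' is used exactly in the second step.

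One can also mention, as a consistency check with \eqref{eq:rank_concentration}, that the pairwise ratios $(s/r)^\tau\to1$. Together with the normalization $\sum_r\omega_t(r;\tau)=1$, this forces all shares to become equal in the limit, and hence equal to $1/M_t$.
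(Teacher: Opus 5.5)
Your proof is correct and takes essentially the same route as the paper's: take the limit $r^{-\tau}=\exp(-\tau\log r)\to1$ in the numerator, pass the limit through the finite sum in the denominator to obtain $M_t$, and apply the quotient rule. Your explicit check that the limiting denominator $M_t\ge1$ is nonzero is a small step the paper leaves implicit.
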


Lemma~\ref{lem:uniform_limit} shows that CTMC approaches uniform allocation as $\tau$ approaches zero, with the core receiving $K_t/M_t$ of the total allocation. This limit also implies vanishing allocation inequality: because every rank receives the same asymptotic share $1/M_t$, the Gini coefficient of the allocation vector approaches zero. Together with Lemma~\ref{lem:core_mass}, this characterizes $\tau$ as a continuous concentration control, ranging from uniform allocation at small $\tau$ to increasingly core-focused allocation as $\tau$ grows. The proofs of Lemma~\ref{lem:uniform_limit} is given in Appendix~\ref{app:proof_uniform_limit}.

\subsection{Prompt encoding and Agent Loop}
\label{sec:CTMC_encoding}

Allocation and encoding play different roles in CTMC. Equation~\eqref{eq:CTMC_allocation} determines how the available context budget is distributed across retrieval ranks, while encoding determines how the allocated information is presented to the frozen LLM. As illustrated in Figures~\ref{fig:CTMC} and~\ref{fig:intuition}, high-priority memories are retained as compact core records, whereas lower-priority memories are compressed into a shared tail summary. These two memory representations are combined with the current state and action to form the context supplied to the LLM.

The interaction loop in Figure~\ref{fig:intuition} separates prediction evaluation from memory update. At step $t$, the frozen LLM acts from the current core--tail context, the environment returns the next observation, and the resulting state prediction is evaluated using \eqref{eq:state_prediction_error} before the new transition is written to $W_t$. Retrieval counts, prediction errors, and prompt-token usage are recorded at the same step, preventing the subsequent memory update from affecting the current evaluation.

Encoding can change the behavior of the frozen LLM even when the retrieval mechanism is unchanged, causing different actions and subsequent state trajectories. We separate this trajectory effect from retrieval changes through frozen-trajectory matched replay, where the compared encodings are evaluated on the same state sequence and retrieved memory identities under the same deterministic retrieval map. We compare the tail error in \eqref{eq:tail_error} between live interaction and frozen-trajectory replay: if a gap observed during live interaction disappears when the trajectory is fixed, we attribute it to the policy-induced trajectory rather than to a change in retrieval logic.

\section{Experiments}
\label{sec:experiments}

\subsection{Experiment Setup.}\label{sec:setup}
Synthetic Graph World contains six graph families with $|V|\in\{100,500,1000\}$, and the random-walk audit covers 54 replicated graph--size cells. Heavy-tail fitting and goodness-of-fit tests follow standard power-law analysis procedures \citep{clauset2009powerlaw,alstott2014powerlaw}. The semantic-policy audit uses a frozen API model with ReAct-style action selection \citep{yao2022react} across uniform-degree, exponential-degree, modular, and scale-free topologies \citep{newman2005powerlaws}. We compare CTMC ($\tau=1.0$) with seven memory baselines spanning full history, sliding-window context \citep{xiao2024streamingllm}, flat retrieval \citep{lewis2020rag}, frequency and recency caches, hierarchical summarization \citep{edge2024graphrag}, and AriGraph-style Graph Memory \citep{anokhin2024arigraph}. Cross-benchmark evaluation uses ALFWorld through its TextWorld-compatible interface \citep{cote2018textworld,shridhar2020alfworld} and LongMemEval \citep{wu2024longmemeval}. All token counts are actual API-reported prompt tokens per API-calling step.

\subsection{Random-walk control}

\paragraph{Random-walk baseline.}
We first consider random walks as a non-semantic control. Across all 54 graph--size cells, the pure power-law pass rate is 0\%, and the PSD exponent ranges from $0.048$ to $0.176$. As shown in Table~\ref{tab:rw_fss}, top-tail dispersion increases with graph size in five of six graph families. Additional concentration statistics across graph families and sizes are reported in Appendix~\ref{app:rw_concentration}. Thus, random walks produce finite-size concentration but do not exhibit the heavy-tailed behavior studied below.

\paragraph{Effect of retrieval.}
We further examine the source of this concentration. State-aware top-$k$
retrieval yields Gini values from $0.44$ to $0.63$, whereas disabling
retrieval reduces them to $0.29$--$0.35$, corresponding to a 33--45\%
decrease. Non-preferential retrieval retains 92--94\% of the scale-free
Gini. Additional retrieval-design controls are reported in
Appendix~\ref{app:retrieval_controls}. These controls show that finite
retrieval itself creates access concentration, providing a baseline for
evaluating the additional effect of semantic policies.

\begin{table}[!ht]

\centering\small
\caption{Random-walk control across graph sizes. 
$\sigma_{\mathrm{top10}}$ is the standard deviation of access counts among the top 10\% most accessed states.}
\label{tab:rw_fss}

\resizebox{\linewidth}{!}{%
\begin{tabular}{lccccc}
\toprule
Graph family & $\sigma_{100}$ & $\sigma_{500}$ & $\sigma_{1000}$ & Slope & $R^2$ \\
\midrule
uniform & 0.152 & 0.802 & 1.364 & 0.505 & 0.967 \\
exponential & 0.333 & 0.900 & 1.369 & 0.433 & 0.971 \\
scale-free & 0.385 & 0.809 & 1.349 & 0.392 & 0.915 \\
modular & 0.162 & 1.461 & 1.366 & 0.573 & 0.874 \\
mixed & 0.231 & 1.252 & 1.430 & 0.541 & 0.975 \\
baseline-symmetric & 0.253 & 1.190 & 1.543 & 0.564 & 0.999 \\
\bottomrule
\end{tabular}
}
\end{table}

\subsection{Heavy-tail test under semantic policies}

We next test whether semantic LLM policies induce heavy-tailed memory
access beyond the concentration observed in the random-walk control.
Figure~\ref{fig:access_ccdf} directly compares the empirical access
tails under the two policies. While both exhibit non-uniform access,
the semantic-policy trace shows a pronounced tail above the fitted
cutoff, motivating a formal tail audit.

\begin{figure*}[!ht]
\centering
\includegraphics[width=0.8\linewidth]{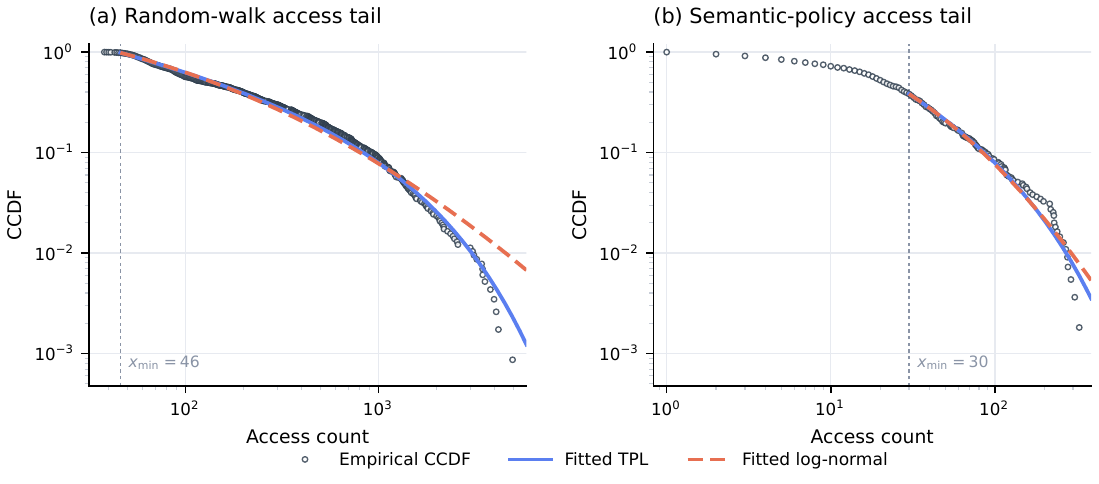}

\caption{Empirical CCDFs of memory access under random-walk and semantic
policies, with fitted truncated-power-law (TPL) and log-normal tails
above the estimated $x_{\min}$.}
\label{fig:access_ccdf}

\end{figure*}

The tail behavior is reproducible across runs. Across three scale-free
runs, the fitted exponents remain stable at $\hat{\alpha}=2.348$,
$2.412$, and $2.390$. Two runs pass the bootstrap screen
($p_{\rm gof}=0.570$ and $0.270$), while the third is borderline
($p_{\rm gof}=0.080$); all three favor a truncated power law over the
log-normal alternative. We next test whether this pattern depends on a
particular graph structure. Table~\ref{tab:llm} and
Figure~\ref{fig:topology_evidence} show that scale-free, modular, and
exponential graphs pass the tail-fit screen, whereas the uniform graph
does not. Together, these results show that heavy-tailed memory access
under semantic policies is reproducible across runs and persists across
multiple graph topologies.

\begin{table}[!ht]
\centering\small
\caption{Heavy-tail audit across graph topologies under semantic LLM
policies. $\hat{\alpha}$ is the fitted tail exponent, $p_{\rm gof}$ is
the bootstrap goodness-of-fit $p$-value, and the final column reports
the likelihood-ratio statistic comparing TPL with the log-normal
alternative, where positive values favor TPL. A topology passes the
tail-fit screen when $p_{\rm gof}\geq0.10$.}
\label{tab:llm}

\resizebox{0.8\linewidth}{!}{%
\begin{tabular}{lccc}
\toprule
Topology & $\hat{\alpha}$ & $p_{\rm gof}$ & TPL vs.\ log-normal \\
\midrule
scale-free & 2.35 & 0.57 & 3.22 \\
modular & 2.03 & 0.96 & 0.60 \\
exponential & 2.98 & 0.36 & 0.81 \\
uniform & 2.91 & 0.06 & 1.69 \\
\bottomrule
\end{tabular}
}

\end{table}

\begin{figure*}[!ht]
\centering
\includegraphics[width=0.8\linewidth]{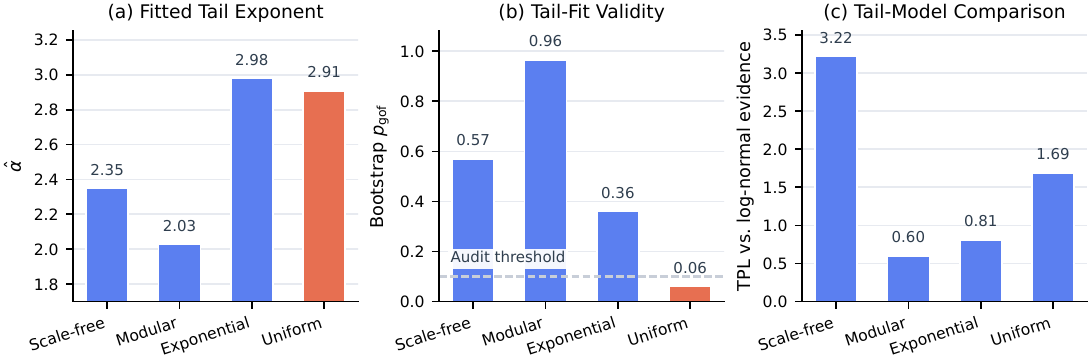}

\caption{Heavy-tail audit across graph topologies under semantic LLM
policies. Left: fitted tail exponent $\hat{\alpha}$. Middle: bootstrap
goodness-of-fit, with $p_{\rm gof}=0.10$ marking the audit threshold.
Scale-free, modular, and exponential graphs pass the screen, whereas
uniform graphs do not. Right: evidence favoring a truncated power law
over a log-normal tail, strongest for scale-free graphs.}
\label{fig:topology_evidence}

\end{figure*}

Beyond tail shape, the resulting access distribution is also strongly
uneven. Figure~\ref{fig:memory_concentration} shows that a small fraction
of memories accounts for a disproportionate share of total access; for
example, the top 10\% captures 47.7\% of random-walk access and 43.8\%
under the semantic policy. This concentration motivates using observed
memory usage to allocate a finite context budget rather than treating
all memories uniformly.

\begin{figure*}[!ht]
\centering
\includegraphics[width=0.8\linewidth]{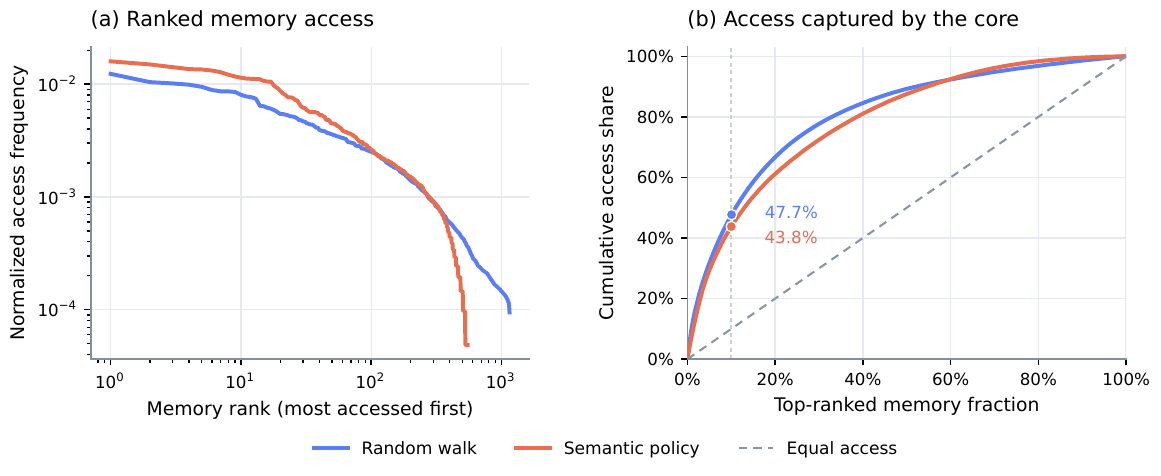}

\caption{Memory-access concentration under random-walk and semantic
policies. Left: normalized access frequency ordered from most to least
accessed memory. Right: cumulative access captured by the top-ranked
memory fraction.}
\label{fig:memory_concentration}

\end{figure*}

\subsection{Heavy-tail-aware memory allocation}

We next test whether the observed heavy-tail structure provides a useful signal for memory allocation. Table~\ref{tab:baselines} compares CTMC with seven memory baselines, where CTMC achieves both the lowest tail error and the lowest token cost. In Table~\ref{tab:baselines}, CTMC reduces tokens per step from $170.5$ to $155.8$ and tail error from $0.948$ to $0.778$ relative to Graph Memory. Across three runs, CTMC further reduces token use by 5.89\% and tail error by 13.6\%, with lower token use in all three runs. These results show that heavy-tail-aware allocation consistently improves both context efficiency and tail-state prediction.

\begin{table}[!ht]
\centering\small
\caption{Memory-policy comparison on Synthetic Graph World. Tokens/step denotes the average number of prompt tokens used per API-calling step, and tail error measures prediction error on the bottom-50\% least-accessed states.}
\label{tab:baselines}

\resizebox{0.8\linewidth}{!}{%
\begin{tabular}{lcc}
\toprule
Method & Tokens/step & Tail error \\
\midrule
Full History & 3601.1 & 1.000 \\
Sliding Window & 1479.5 & 0.987 \\
Flat Retrieval & 159.9 & 0.994 \\
Frequency Cache & 166.4 & 1.000 \\
Recency Cache & 159.0 & 1.000 \\
Hierarchical Summary & 411.7 & 0.996 \\
Graph Memory & 170.5 & 0.948 \\
CTMC ($\tau=1.0$) & \textbf{155.8} & \textbf{0.778} \\
\bottomrule
\end{tabular}
}

\end{table}

\subsection{Ablation Study}
\label{sec:ablation}

\paragraph{Allocation and encoding.}
We separate CTMC allocation from compact encoding in Table~\ref{tab:ablation}. CTMC reduces tail error under both encodings, while compact encoding primarily reduces token cost. Compact CTMC achieves 155.7 tokens per step with the lowest tail error of 0.805, compared with 156.4 tokens and 0.931 tail error for compact Graph Memory. Under frozen-trajectory replay with retrieval fixed, the verbose--compact tail-error gap decreases to within $4\times10^{-4}$, further indicating that the main tail-error improvement comes from allocation rather than encoding.

\begin{table}[!ht]
\centering\small
\caption{Ablation of memory allocation and encoding. CTMC reduces
tail error under both encodings, while compact encoding primarily
reduces token cost.}
\label{tab:ablation}

\resizebox{0.8\linewidth}{!}{%
\begin{tabular}{llcc}
\toprule
Method & Encoding & Tokens/step & Tail error \\
\midrule
Graph Memory & Verbose & 170.7 & 0.932 \\
CTMC & Verbose & 200.3 & 0.856 \\
\midrule
Graph Memory & Compact & 156.4 & 0.931 \\
CTMC & Compact & \textbf{155.7} & \textbf{0.805} \\
\bottomrule
\end{tabular}%
}

\end{table}

\paragraph{Memory allocation strategy.}
\begin{table}[!ht]
\centering\small
\caption{Memory-allocation ablation under matched context budgets.}
\label{tab:allocation_ablation}

\resizebox{0.8\linewidth}{!}{%
\begin{tabular}{lccc}
\toprule
Allocation & Tokens/step & Overall error & Tail error \\
\midrule
CTMC ($\tau=1.0$) & \textbf{155.7} & \textbf{0.214} & \textbf{0.805} \\
Uniform ($\tau=0$) & 156.0 & 0.236 & 0.861 \\
Shuffled ranks ($\tau=1.0$) & 155.9 & 0.249 & 0.889 \\
Reversed ranks ($\tau=1.0$) & 156.1 & 0.267 & 0.917 \\
\bottomrule
\end{tabular}
}
\end{table}

We test whether CTMC benefits from the observed memory-usage structure.
Table~\ref{tab:allocation_ablation} compares three controls:
\begin{itemize}
    \item \textbf{Uniform} ($\tau=0$): removes usage-dependent allocation.
    \item \textbf{Shuffled}: randomly permutes the observed usage order.
    \item \textbf{Reversed}: inverts the observed usage order.
\end{itemize}
CTMC achieves the lowest overall and tail errors, while the controls
progressively degrade performance under nearly identical token use
(155.7--156.1 per step). This shows that the gain comes from preserving
the observed usage structure rather than additional context.

\subsection{Control by allocation exponent}

We next test the monotonic concentration predicted by Lemma~\ref{lem:core_mass}. Figure~\ref{fig:tau_control} sweeps $\tau$ from $0.25$ to $2.0$ across three seeds. Increasing $\tau$ consistently increases Gini ($\rho=0.976$), skew ($\rho=0.988$), and the max/median ratio ($\rho=1.000$). These results empirically confirm that $\tau$ provides direct control over memory concentration, with larger values shifting allocation toward highly ranked items.

\begin{figure}[!ht]
\centering
\includegraphics[width=0.9\linewidth]{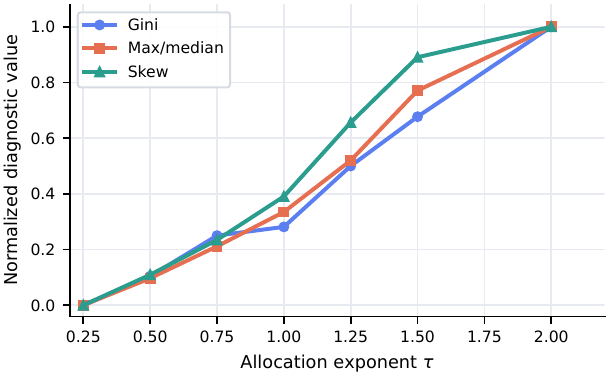}

\caption{Effect of $\tau$ on memory concentration across three seeds. Gini, skew, and max/median ratio increase consistently with $\tau$.}
\label{fig:tau_control}

\end{figure}

\subsection{Cross-benchmark evaluation}

We finally test whether CTMC's context savings extend beyond Synthetic
Graph World. As shown in Figure~\ref{fig:crossbench}, CTMC reduces
prompt tokens in every paired configuration: 5.68--6.22\% across
synthetic replications, 3.90--6.61\% across ALFWorld
settings~\citep{cote2018textworld,shridhar2020alfworld}, and 24.48\%
on LongMemEval~\citep{wu2024longmemeval} while maintaining aggregate
accuracy parity. These results demonstrate consistent context efficiency
across graph-based, embodied, and long-term conversational memory
settings.

\begin{figure}[!ht]
\centering
\includegraphics[width=0.95\linewidth]{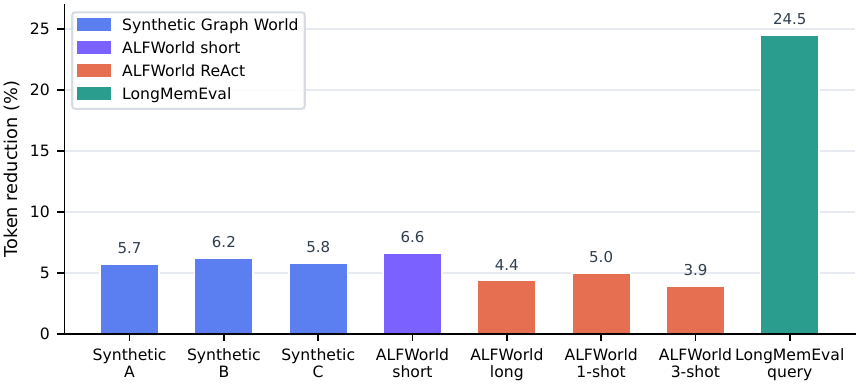}

\caption{Prompt-token reduction of CTMC relative to Graph Memory across
paired configurations.}
\label{fig:crossbench}

\end{figure}

We further break down LongMemEval by question type in
Figure~\ref{fig:longmemeval_types}. CTMC consistently reduces prompt
cost across question categories while maintaining comparable accuracy,
showing that the aggregate saving is not driven by a single question
type.

\begin{figure}[!ht]
\centering
\includegraphics[width=\linewidth]{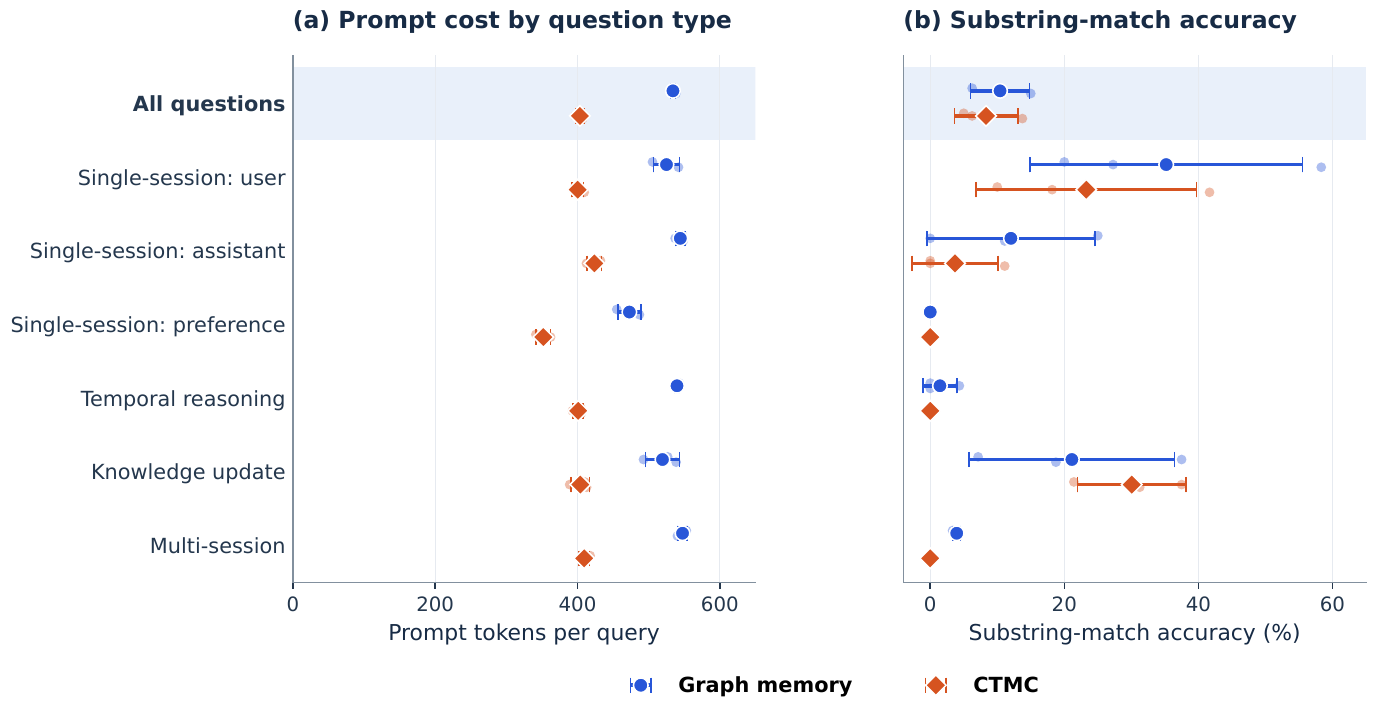}

\caption{LongMemEval results by question type. Left: prompt tokens per
query. Right: substring-match accuracy. Error bars show seed-level
mean $\pm$ sample standard deviation.}
\label{fig:longmemeval_types}

\end{figure}

\section{Conclusion}
\label{sec:conclusion}

We studied memory access in long-horizon language agents and found that usage can become highly concentrated, while careful random-walk and retrieval controls show that concentration alone is insufficient evidence for semantic heavy-tail behavior. Our tail audits nevertheless identify reproducible heavy-tailed access under semantic policies, motivating CTMC, which converts observed usage structure into rank-based context allocation. Across Synthetic Graph World, ALFWorld, and LongMemEval, CTMC improves the context--performance tradeoff, with ablations confirming that the gains arise primarily from memory allocation rather than encoding. These results establish memory-access structure as a measurable and practically useful signal for managing finite context in long-horizon agents.

\bibliography{references}
\bibliographystyle{ACM-Reference-Format}

\clearpage
\appendix

\section{Additional Random-Walk Concentration Results}
\label{app:rw_concentration}

We provide additional results for the random-walk control to characterize
how memory-access concentration changes with graph size.
Table~\ref{tab:rw_concentration} reports three complementary statistics
across six graph families. The Gini coefficient measures overall access
inequality, the Top-10\% share measures the fraction of accesses captured
by the most frequently accessed 10\% of memories, and
$\sigma_{\mathrm{top10}}$ measures dispersion within this highly accessed
subset. Across graph families, both Gini and Top-10\% share generally
increase substantially from 100 to 1,000 nodes. The scaling of
$\sigma_{\mathrm{top10}}$ is also positive for all six families, with
$R^2$ ranging from $0.874$ to $0.999$. These results show that strong
access concentration can emerge under a non-semantic random-walk policy
and becomes more pronounced as the environment grows. Combined with the
failure of all 54 random-walk graph--size cells to pass the pure
power-law screen, this result reinforces that access concentration alone
is insufficient evidence for the semantic-policy tail behavior studied
in the main text.

\begin{table*}[!ht]
\centering
\caption{Random-walk memory-access concentration across graph families
and graph sizes. Gini measures overall access inequality, Top-10\% share
is the percentage of accesses captured by the most accessed 10\% of
memories, and $\sigma_{\mathrm{top10}}$ is the standard deviation of
access counts within this top group. Columns 100, 500, and 1,000 denote
the number of graph nodes. Slope and $R^2$ summarize the scaling of
$\sigma_{\mathrm{top10}}$ with graph size.}
\label{tab:rw_concentration}
\footnotesize
\setlength{\tabcolsep}{2.5pt}
\resizebox{0.85\textwidth}{!}{%
\begin{tabular}{lccccccccccc}
\toprule
& \multicolumn{3}{c}{Gini coefficient}
& \multicolumn{3}{c}{Top-10\% share (\%)}
& \multicolumn{3}{c}{$\sigma_{\mathrm{top10}}$}
& \multicolumn{2}{c}{Scaling} \\
\cmidrule(lr){2-4}
\cmidrule(lr){5-7}
\cmidrule(lr){8-10}
\cmidrule(lr){11-12}
Graph family
& 100 & 500 & 1{,}000
& 100 & 500 & 1{,}000
& 100 & 500 & 1{,}000
& Slope & $R^2$ \\
\midrule
Uniform degree
& 0.5741 & 0.7550 & 0.7934
& 39.53 & 74.84 & 78.57
& 0.152 & 0.802 & 1.364
& 0.505 & 0.967 \\

Exponential degree
& 0.6054 & 0.7618 & 0.7954
& 46.46 & 75.77 & 78.68
& 0.333 & 0.900 & 1.369
& 0.433 & 0.971 \\

Scale-free
& 0.6045 & 0.7563 & 0.7944
& 46.24 & 74.70 & 78.63
& 0.385 & 0.809 & 1.349
& 0.392 & 0.915 \\

Modular
& 0.4851 & 0.8098 & 0.8170
& 30.85 & 79.56 & 80.48
& 0.162 & 1.461 & 1.366
& 0.573 & 0.874 \\

Mixed
& 0.4024 & 0.7867 & 0.8220
& 26.66 & 78.13 & 80.32
& 0.231 & 1.252 & 1.430
& 0.541 & 0.975 \\

Symmetric
& 0.6141 & 0.7515 & 0.7868
& 50.44 & 76.98 & 78.58
& 0.253 & 1.190 & 1.543
& 0.564 & 0.999 \\
\bottomrule
\end{tabular}%
}
\end{table*}

\section{Retrieval-Design Controls}
\label{app:retrieval_controls}

We further examine whether the access concentration observed under the
random-walk policy can arise from the retrieval mechanism itself.
Table~\ref{tab:retrieval_design_controls} compares four retrieval
designs while matching the underlying state and transition summaries.
Across all five graph families, disabling retrieval produces the lowest
Gini coefficient and generally the lowest max/median ratio. Introducing
retrieval increases concentration even without a semantic policy, with
state-aware frequency retrieval producing the strongest concentration
in most settings. For example, on the scale-free graph, the Gini
coefficient increases from $0.350$ without retrieval to $0.633$ with
state-aware frequency retrieval, while the max/median ratio increases
from $5.25$ to $38.00$. These results show that retrieval design itself
can induce substantial access concentration and further motivate
separating concentration from statistical evidence of semantic-policy
heavy-tail behavior.

\begin{table*}[!ht]
\centering
\caption{Retrieval-design controls under the random-walk policy. Each
entry is a single-seed measurement. State and transition summaries are
matched across retrieval conditions. Gini measures overall inequality
in memory access, while Max/median is the ratio between the maximum and
median memory-access counts.}
\label{tab:retrieval_design_controls}
\small
\setlength{\tabcolsep}{6pt}
\begin{tabular}{lccccc}
\toprule
Retrieval method & Uniform degree & Exponential degree & Scale-free & Modular & Mixed \\
\midrule
\multicolumn{6}{l}{\textit{Gini coefficient}} \\
No retrieval & 0.334 & 0.339 & 0.350 & 0.333 & 0.292 \\
Recency & 0.577 & 0.569 & 0.584 & 0.472 & 0.358 \\
State-mate uniform & 0.571 & 0.574 & 0.595 & 0.468 & 0.351 \\
State-aware frequency & 0.584 & 0.608 & 0.633 & 0.500 & 0.435 \\
\midrule
\multicolumn{6}{l}{\textit{Max/median}} \\
No retrieval & 5.50 & 4.00 & 5.25 & 4.29 & 3.18 \\
Recency & 12.88 & 17.70 & 17.25 & 6.25 & 3.88 \\
State-mate uniform & 10.56 & 15.89 & 17.00 & 6.12 & 3.38 \\
State-aware frequency & 12.88 & 20.33 & 38.00 & 7.41 & 7.14 \\
\bottomrule
\end{tabular}
\end{table*}

\section{Proofs for Core--Tail Memory Cotroller}
\label{app:formal_proofs}

\subsection{Proof of Lemma~\ref{lem:core_mass}}
\label{app:proof_core_mass}

\begin{proof}
Define the allocation-weighted mean log-rank
\begin{equation}
\mu_t(\tau)=\sum_{r=1}^{M_t}\omega_t(r;\tau)\log r.
\label{eq:mean_log_rank}
\end{equation}
Differentiating \eqref{eq:CTMC_allocation} with respect to $\tau$ gives
\begin{equation}
\frac{\partial}{\partial\tau}\omega_t(r;\tau)=\omega_t(r;\tau)\left(\mu_t(\tau)-\log r\right).
\label{eq:allocation_derivative}
\end{equation}
Using \eqref{eq:core_mass},
\begin{equation}
\begin{aligned}
\frac{\partial}{\partial\tau}C_t(K_t,\tau) &= \sum_{r=1}^{K_t}\frac{\partial}{\partial\tau}\omega_t(r;\tau) \\
&= \sum_{r=1}^{K_t}\omega_t(r;\tau)\left(\mu_t(\tau)-\log r\right) \\
&= C_t(K_t,\tau)\mu_t(\tau)-\sum_{r=1}^{K_t}\omega_t(r;\tau)\log r.
\end{aligned}
\label{eq:core_mass_derivative}
\end{equation}
Define the allocation-weighted mean log-rank within the core as
\begin{equation}
\mu_{{\rm core},t}(\tau)=\frac{\sum_{r=1}^{K_t}\omega_t(r;\tau)\log r}{C_t(K_t,\tau)}.
\label{eq:core_mean_log_rank}
\end{equation}
Substituting \eqref{eq:core_mean_log_rank} into \eqref{eq:core_mass_derivative} yields
\begin{equation}
\begin{aligned}
\frac{\partial}{\partial\tau}C_t(K_t,\tau) &= C_t(K_t,\tau)\mu_t(\tau)-C_t(K_t,\tau)\mu_{{\rm core},t}(\tau) \\
&= C_t(K_t,\tau)\left(\mu_t(\tau)-\mu_{{\rm core},t}(\tau)\right).
\end{aligned}
\label{eq:core_mass_derivative_compact}
\end{equation}
Because every rank outside the core is larger than every rank inside the core, the allocation-weighted mean of $\log r$ over the tail is strictly larger than $\mu_{{\rm core},t}(\tau)$. Since $K_t<M_t$ and $\omega_t(r;\tau)>0$ for all ranks, the full mean $\mu_t(\tau)$ is a strict convex combination of the core and tail means. Therefore
\begin{equation}
\mu_t(\tau)>\mu_{{\rm core},t}(\tau).
\end{equation}
Since $C_t(K_t,\tau)>0$, \eqref{eq:core_mass_derivative_compact} implies
\begin{equation}
\frac{\partial}{\partial\tau}C_t(K_t,\tau)>0.
\end{equation}
Hence $C_t(K_t,\tau)$ is strictly increasing in $\tau$.
\end{proof}

\subsection{Proof of Lemma~\ref{lem:uniform_limit}}
\label{app:proof_uniform_limit}

\begin{proof}
For any fixed $r\in\{1,\ldots,M_t\}$,
\begin{equation}
\begin{aligned}
\lim_{\tau\downarrow0}r^{-\tau} &= \lim_{\tau\downarrow0}\exp(-\tau\log r) \\
&= 1.
\end{aligned}
\end{equation}
Because $M_t$ is finite,
\begin{equation}
\begin{aligned}
\lim_{\tau\downarrow0}\sum_{r'=1}^{M_t}(r')^{-\tau} &= \sum_{r'=1}^{M_t}\lim_{\tau\downarrow0}(r')^{-\tau} \\
&= M_t.
\end{aligned}
\end{equation}
Applying these limits to \eqref{eq:CTMC_allocation},
\begin{equation}
\begin{aligned}
\lim_{\tau\downarrow0}\omega_t(r;\tau) &= \lim_{\tau\downarrow0}\frac{r^{-\tau}}{\sum_{r'=1}^{M_t}(r')^{-\tau}} \\
&= \frac{1}{M_t}.
\end{aligned}
\end{equation}
Thus every rank receives the same allocation in the limit $\tau\downarrow0$.
\end{proof}

\section{Prompt Templates: Compact and Verbose Encoding}
\label{app:encoding_examples}

The final CTMC variant changes only the LLM-facing encoding of retrieved hints. The compact format removes raw score annotations, preserves rank order, and aggregates the low-rank tail into a short typed summary. The prompt contract below is the template used by the controller: it exposes the same memory evidence as a ranked core plus a measured tail, while asking the model to produce a bounded next-state prediction rather than free-form reasoning.

\begin{samepage}
\noindent\textbf{Compact CTMC prompt template.}\par

\noindent\fcolorbox{teal!35}{teal!6}{%
\parbox{0.96\linewidth}{\small\ttfamily
Task: predict the next structured state after the current action.\par
Current state: \{state\}\par
Action: \{action\}\par
Core memories, ranked by retrieval order:\par
rank 1: \{state\} -> \{action\} -> \{next state\}; hint=\{short evidence\}\par
rank 2: \{state\} -> \{action\} -> \{next state\}; hint=\{short evidence\}\par
...\par
Tail summary: count=\{N\}; dominant transitions=\{types\}; rare exceptions=\{short list\}\par
Budget rule: use core memories first; use tail summary only to break ties or cover rare states.\par
Output: next-state=\{prediction\}; confidence=\{low, medium, high\}; evidence-rank=\{rank or tail\}.
}}
\end{samepage}

\begin{samepage}
\noindent\textbf{Verbose versus compact examples.}\par

\noindent\fcolorbox{gray!30}{gray!6}{%
\parbox{0.96\linewidth}{\small\textbf{Synthetic Graph World, verbose graph memory}\par\ttfamily
Retrieved items: (room, pickup, bag, score 1.23, degree 8, source step 42);
(room, inspect, drawer, score 1.18, degree 6, source step 39);
(park, walk, home, score 0.87, degree 5, source step 17);
(park, jog, home, score 0.32, degree 3, source step 14).\par
Instruction: use all retrieved transitions and scores to predict the next state.
}}
\end{samepage}

\noindent\fcolorbox{teal!35}{teal!6}{%
\parbox{0.96\linewidth}{\small\textbf{Synthetic Graph World, compact CTMC}\par\ttfamily
Current: room. Action: pickup.\par
Core: rank 1 room -> pickup -> bag; rank 2 room -> inspect -> drawer.\par
Tail summary: 2 park-to-home variants; no room pickup exception observed.\par
Output next-state only; cite rank 1, rank 2, or tail.
}}

\noindent\fcolorbox{gray!30}{gray!6}{%
\parbox{0.96\linewidth}{\small\textbf{ALFWorld, verbose history}\par\ttfamily
Observation history: you are in the kitchen; cabinet closed; apple in sink;
you opened cabinet at step 12; you examined counter at step 15; you washed apple at step 19;
retrieval scores: cabinet 0.91, sink 0.88, fridge 0.76, counter 0.41.\par
Question: after action put apple in fridge, predict the next observation.
}}

\noindent\fcolorbox{teal!35}{teal!6}{%
\parbox{0.96\linewidth}{\small\textbf{ALFWorld, compact CTMC}\par\ttfamily
Current: holding washed apple in kitchen. Action: put apple in fridge.\par
Core: rank 1 fridge is open; rank 2 apple is clean; rank 3 agent is at fridge.\par
Tail summary: cabinet and counter facts are background; no blocking object retrieved.\par
Output: next observation and one evidence rank.
}}

\noindent\fcolorbox{gray!30}{gray!6}{%
\parbox{0.96\linewidth}{\small\textbf{LongMemEval, verbose session memory}\par\ttfamily
Session snippets: user mentioned moving to Boston in April; user later corrected trip to May;
assistant discussed lease renewal; another session mentioned Seattle conference;
retrieval scores: Boston 0.79, May correction 0.58, lease 0.51, Seattle 0.49.\par
Question: what month did the user finally plan to move?
}}

\noindent\fcolorbox{teal!35}{teal!6}{%
\parbox{0.96\linewidth}{\small\textbf{LongMemEval, compact CTMC}\par\ttfamily
Question type: long-term recall with update.\par
Core: rank 1 initial plan Boston in April; rank 2 later correction move in May.\par
Tail summary: lease and Seattle snippets are related context but not answer-bearing.\par
Output: answer=May; evidence-rank=2; note=correction overrides earlier memory.
}}

\paragraph{Why the template matters.}
The verbose form is useful for auditing because it exposes scores and raw provenance, but it asks the frozen model to parse many low-value fields. The compact CTMC form keeps the information that matters for control: rank order, the current state/action, a small core, and an explicit tail summary. This is why compact encoding can lower token use without merely deleting the tail; the tail remains present as an aggregate object with a clear role in the prompt.

\end{document}

%% file: aamas-2027-compat.tex
\makeatletter
\def\aamas@legacy@editors{\fi}
\ifx\acmConference@editors\aamas@legacy@editors
  \expandafter\@firstofone
\else
  \expandafter\@gobble
\fi
{\acmConference@editors\gdef\acmConference@editors{}}
\let\aamas@legacy@editors\@undefined
\makeatother